\theoremstyle{definition}  
\newtheorem{lemma}{Lemma}
\newtheorem{definition}{Definition}
\theoremstyle{plain}
\newtheorem{theorem}{Theorem}
\newcommand{\ind}[1]{\ensuremath{{\mathbf I}{\left\{#1\right\}}}}
\def\deq{\triangleq}
\newcommand{\En}{\mathbb E}
\newcommand{\inner}[1]{\left\langle #1 \right\rangle}
\newcommand{\reals}{{\mathbb R}}
\newcommand{\norm}[1]{\left\|#1\right\|}
\title{Does data interpolation contradict statistical optimality?}
\date{}
\author{Mikhail Belkin \\ The Ohio State University  \and Alexander Rakhlin \\ MIT \and Alexandre B. Tsybakov \\ CREST, ENSAE}
\begin{document}
	
\maketitle
\begin{abstract}
	We show that learning methods interpolating the training data can achieve optimal rates for the problems of nonparametric regression and prediction with square loss. 
	\end{abstract}

\section{Introduction}

In this paper, we exhibit estimators that interpolate the data, yet achieve optimal rates of convergence for the problems of nonparametric regression and prediction with square loss. This curious observation goes against the usual (or, folklore?) intuition that a good statistical procedure should forego the exact fit to data in favor of a more smooth representation. The family of estimators we consider do exhibit a bias-variance trade-off with a tuning parameter, yet this ``regularization'' co-exists in harmony with data interpolation.

Motivation for this work is the recent focus within the machine learning community on the out-of-sample performance of neural networks. These flexible models are typically trained to fit the data exactly (either in their sign or in the actual value), yet they predict well on unseen data. The conundrum has served both as a source of excitement about the ``magical'' properties of neural networks, as well as a call for the development of novel statistical techniques to resolve it. 

The aim of this short note is to show that not only can interpolation be a good statistical procedure, but it can even be optimal in a minimax sense. To the best of our knowledge, such optimality has not been exhibited before.

Let $(X,Y)$ be a random pair on $\reals^d\times \reals$ with distribution $P_{XY}$, and let $f(x) = \En[Y|X=x]$ be the regression function. A goal of nonparametric estimation is to construct an estimate $f_n$ of $f$, given a sample $(X_1,Y_1),\ldots,(X_n,Y_n)$ drawn independently from $P_{XY}$. A classical approach to this problem is kernel smoothing. In particular, the Nadaraya-Watson estimator \citep{nadaraya1964estimating,watson1964smooth} is defined as
\begin{align}
	\label{eq:NW_def}
	f_n(x) = \frac{\sum_{i=1}^n Y_i K\left(\frac{x-X_i}{h}\right)}{\sum_{i=1}^n K\left(\frac{x-X_i}{h}\right)},
\end{align}
where $K:\reals^d\to\reals$ is a kernel function and $h>0$ is a bandwidth and we assume that the denominator does not vanish. Appropriate choices of $K$ and $h$ lead to optimal rates of estimation, under various assumptions, and we refer the reader to \citep{tsybakov2009introduction} and references therein.

We consider singular kernels that approach infinity when their argument tends to zero. It has been observed, at least since \citep{shepard1968two}, that the resulting function in \eqref{eq:NW_def} interpolates the data. We will focus on the particular kernel
\begin{align}
	\label{eq:def_our_kernel}
	K\left(u\right) \deq \norm{u}^{-a} \ind{\norm{u}\leq 1}, 
\end{align}
for some $a>0$. Here, $\norm{\cdot}$ denotes the Euclidean norm. Our results can be extended to other related singular kernels, for example, to 
\begin{align}
	\label{eq:def_katkovnik_kernel}
	K\left(u\right) \deq \norm{u}^{-a} [1-\norm{u}]^2_+ 
\end{align}
where $[c]_+ = \max\{c,0\}$, and
\begin{align}
	\label{eq:def_cap_kernel}
	K\left(u\right) \deq \norm{u}^{-a} \cos^2(\pi \norm{u}/2) \ind{\norm{u}\leq 1},
\end{align}
considered in \citep{lancaster1981surfaces,katkovnik1985book}. Also,  $\norm{\cdot}$ can be any norm on $\reals^d$, not necessarily the Euclidean norm.

Our main result, stated precisely in the next section and proved in Section~\ref{sec:proofs}, establishes that
\begin{align*}
	\En\norm{f_n-f}^2_{L_2(P_X)} \deq \En(f_n(X)-f(X))^2 \leq C n^{-\frac{2\beta}{2\beta+d}}
\end{align*}
whenever the regression function $f$ belongs to a H\"older class with parameter $\beta\in(0,2]$, and under additional assumptions stated below. Here $C$ is a constant that does not depend on $n$ and $P_X$ is the marginal distribution of $X$. The rate $n^{-\frac{2\beta}{2\beta+d}}$ is the classical minimax optimal rate for these classes.

Our result also yields a curious conclusion for the problem of prediction with square loss. Observe that excess loss---an object studied in Statistical Learning Theory---with respect to a H\"older class $\Sigma(\beta, L)$, formally defined below, can be written as
\begin{align*}
	&\En (f_n(X)-Y)^2 - \inf_{g\in \Sigma(\beta,L)} \En(g(X)-Y)^2 \\
	&= \En (f_n(X)-f(X))^2 - \inf_{g\in \Sigma(\beta,L)} \En(g(X)-f(X))^2 \\
	&= \En (f_n(X)-f(X))^2
\end{align*}
under the assumption that the model is \emph{well-specified} (that is, the regression function is in the class). We remark that the estimator $f_n$ is \emph{improper}, in the sense that it does not itself belong to the H\"older class (its smoothness depends on $h$ and, hence, on $n$). In conclusion, despite the fact that $f_n$ is improper and fits the data exactly, it attains optimal rates for excess loss. We refer the reader to  \citep{rakhlin2017empirical} for further discussion of optimal rates in nonparametric estimation and statistical learning.

\paragraph{Prior work}

Within the context of pattern classification, the 1-Nearest-Neighbor classifier is an example of an interpolating rule. It is shown in \citep{cover1967nearest} that the limit (as $n$ tends to infinity) of the classification risk is no more than twice the Bayes risk. To make $k$-Nearest-Neighbor rules consistent, one is required to increase $k$ with $n$ \citep{devroye1996probabilistic,chaudhuri2014rates}, in which case the rule is no longer interpolating.

The idea of interpolating the data using singular kernels appears already in \citep{shepard1968two} and was further developed in \citep{lancaster1981surfaces,katkovnik1985book}, among others. These works were focusing on deterministic properties of the interpolants and no statistical guarantees have been established until \cite{devroye1998hilbert} have shown consistency of the estimator \eqref{eq:NW_def} for the singular kernel $K\left(u\right) = \norm{u}^{-d}$, however, without finite sample guarantees. The recent work of \cite{belkin2018overfitting} proves the first (to the best of our knowledge) non-asymptotic rates for interpolating procedures, yet the guarantees are suboptimal. The present paper shows that statistical optimality of interpolating rules can indeed be achieved and it holds under rather standard nonparametric assumptions on the regression function.

\section{Main Results}
\label{sec:main}

We start with a definition.
\begin{definition}
	For $L>0$ and $\beta\in (0,2]$, the $(\beta,L)$-H\"older class, denoted by $\Sigma(\beta,L)$, is defined as follows:
	\begin{itemize}
		\item If $\beta\in (0,1]$, the class $\Sigma(\beta,L)$ consists of functions $f:\reals^d\to\reals$ satisfying
		\begin{align}
			\label{eq:holder01}
			\forall x,y\in\reals^d,~~~ |f(x)-f(y)|\leq L \norm{x-y}^{\beta}.
		\end{align}
		\item If $\beta\in (1,2]$, the class $\Sigma(\beta,L)$ consists of continuously differentiable functions $f:\reals^d\to\reals$ satisfying
		\begin{align}
			\label{eq:holder12}
			\forall x,y\in\reals^d,~~~ |f(x)-f(y)-\inner{\nabla f(y),x-y}| \leq L \norm{x-y}^\beta
		\end{align}
		where $\inner{\cdot,\cdot}$ denotes the inner product. 
	\end{itemize}
\end{definition}

We assume the following.
\begin{itemize}
	\item[(A1)] For any $x\in\reals^d$, the expectation $\En[Y|X=x]=f(x)$ exists and $\En[\xi^2|X=x]\leq \sigma^2_\xi < \infty$, where $\xi = Y-\En[Y|X] = Y-f(X)$. 
	\item[(A2)] The marginal density $p(\cdot)$ of $X$ exists and satisfies $0<p_{\min} \leq p(x) \leq p_{\max}$ for all $x$ on its support.
\end{itemize}

The Nadaraya-Watson estimator for a singular kernel $K$  is defined as 
\begin{align}
	\label{eq:NW_def_precise}
	f_n(x) = \begin{cases}
		~~ Y_i & \text{if } x=X_i \text{ for some } i=1,\ldots,n, \\
		~~ 0 & \text{if } \sum_{i=1}^n K\left(\frac{x-X_i}{h}\right) = 0, \\
		~~ \frac{\sum_{i=1}^n Y_i K\left(\frac{x-X_i}{h}\right)}{\sum_{i=1}^n K\left(\frac{x-X_i}{h}\right)} & \text{otherwise}.
	\end{cases}
\end{align}

The two main results for this estimator are now stated.
\begin{theorem} 
	\label{thm:beta01}
	Assume that $f\in\Sigma(\beta,L_f)$ for $\beta\in(0,1]$, $L_f>0$. Let Assumptions $(A1)$ and $(A2)$ be satisfied, and $0<a<d/2$. Then for any fixed $x_0\in\reals^d$ in the support of $p$ the estimator \eqref{eq:NW_def_precise} with kernel \eqref{eq:def_our_kernel} and bandwidth $h=n^{-\frac{1}{2\beta+d}}$ satisfies
	$$\En[(f_n(x_0)-f(x_0))^2]\leq C n^{-\frac{2\beta}{2\beta+d}}$$
	where $C>0$ is a constant that does not depend on $n$.
\end{theorem}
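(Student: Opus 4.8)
The plan is to follow the classical bias–variance decomposition for kernel estimators, but to handle the singularity of $K$ at the origin carefully. Condition on the design $X_1,\dots,X_n$ and write $f_n(x_0) = \sum_i W_i Y_i$ with weights $W_i = K((x_0-X_i)/h)/\sum_j K((x_0-X_j)/h)$, defined on the event $\mathcal E$ that the denominator is nonzero (on the complement, and when $x_0$ coincides with some $X_i$, the estimator is handled trivially or by a separate crude bound). Then $\En[(f_n(x_0)-f(x_0))^2 \mid X_{1:n}] = \big(\sum_i W_i(f(X_i)-f(x_0))\big)^2 + \sum_i W_i^2 \,\En[\xi_i^2\mid X_i]$. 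The variance term is bounded by $\sigma_\xi^2 \max_i W_i \le \sigma_\xi^2 \sum_i W_i^2$, so everything reduces to controlling (i) the bias term $\big|\sum_i W_i(f(X_i)-f(x_0))\big|$ and (ii) $\sum_i W_i^2$, both in expectation over the design.

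For the bias, since $\beta\le 1$ and $f\in\Sigma(\beta,L_f)$, we have $|f(X_i)-f(x_0)|\le L_f\|X_i-x_0\|^\beta$, and the weights are supported on $\{i:\|X_i-x_0\|\le h\}$, so the bias is at most $L_f h^\beta$ deterministically on $\mathcal E$. With $h = n^{-1/(2\beta+d)}$ this already gives a squared bias of order $h^{2\beta} = n^{-2\beta/(2\beta+d)}$, exactly the target rate. So the real work is the variance term: I need $\En\big[\sum_i W_i^2 \cdot \ind{\mathcal E}\big] \lesssim h^{2\beta} \asymp n^{-2\beta/(2\beta+d)}$, equivalently (up to constants and the bias) $\lesssim (nh^d)^{-1}$, which is the usual effective-sample-size heuristic. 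Writing $K_i = K((x_0-X_i)/h) = \|(x_0-X_i)/h\|^{-a}\ind{\|x_0-X_i\|\le h}$, we have $\sum_i W_i^2 = \big(\sum_i K_i^2\big)/\big(\sum_i K_i\big)^2$, so I must lower-bound the denominator $\sum_i K_i$ and upper-bound $\sum_i K_i^2$ in a way that survives taking expectations — in particular the expectation must not be spoiled by the (low-probability but heavy) event that the nearest design point is extremely close to $x_0$, which is exactly where the $0<a<d/2$ hypothesis enters.

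The key computation is therefore to estimate the moments $\En[K_i]$ and $\En[K_i^2]$ under (A2): by a change to polar coordinates around $x_0$, $\En[K_i] \asymp \int_0^h (r/h)^{-a} r^{d-1}\,dr \asymp h^d$ provided $a<d$ (so the integral converges), and similarly $\En[K_i^2]\asymp \int_0^h (r/h)^{-2a} r^{d-1}\,dr \asymp h^d$ provided $2a<d$ — this is precisely the assumption $a<d/2$, and it is the crucial place where the parameter constraint is used. So in expectation $\En\sum_i K_i \asymp nh^d$ and $\En\sum_i K_i^2 \asymp nh^d$. The main obstacle is turning these into a bound on $\En\big[\big(\sum K_i^2\big)/\big(\sum K_i\big)^2\big]$: one cannot simply divide expectations. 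I would (a) split on whether $\sum_i K_i$ is comparable to its expectation $nh^d$ or not — on the "good" event where $\sum_i K_i \gtrsim nh^d$ one gets $\sum_i W_i^2 \le \big(\sum_i K_i^2\big)/(nh^d)^2$ and then takes expectations using $\En\sum K_i^2\asymp nh^d$ to get $\lesssim 1/(nh^d) \asymp h^{2\beta}$; and (b) on the "bad" event where $\sum_i K_i$ is small, bound $\sum_i W_i^2 \le 1$ trivially and show this event has probability $\lesssim h^{2\beta}$ (e.g. via a one-sided Bernstein/Chernoff or even Chebyshev bound on $\sum_i K_i$, using that each $K_i\ge 1$ on its support so the sum is lower-bounded by the number $N_h$ of design points in the ball $B(x_0,h)$, and $N_h$ is Binomial with mean $\asymp nh^d\to\infty$, so $\Pr(N_h \text{ small})$ is exponentially small — far smaller than $h^{2\beta}$). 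Combining the bias bound, the good-event variance bound, and the bad-event contribution (which absorbs the $\ind{\mathcal E^c}$ term and the "$x_0=X_i$" case as well, since those also force $N_h$ or the denominator to be atypically small — actually the $x_0 = X_i$ case has probability zero under (A2) for fixed $x_0$) yields the claimed $Cn^{-2\beta/(2\beta+d)}$ with $C$ depending only on $d,\beta,a,L_f,\sigma_\xi^2,p_{\min},p_{\max}$ and not on $n$.
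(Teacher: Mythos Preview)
Your proposal is correct and follows essentially the same route as the paper: the bias is bounded deterministically by $L_f h^\beta$ via the H\"older condition and the compact support of $K$ (this is exactly Lemma~\ref{lem:bias_holder_01}), and the variance $\En\big[\sum_i W_i^2\,\ind{\mathcal{E}}\big]$ is controlled by splitting on whether the number $N_h$ of design points in $B(x_0,h)$ is of typical size $\asymp nh^d$, using $K_h(X_i)\ge \ind{\|X_i-x_0\|\le h}$ to lower-bound the denominator on the good event, the square-integrability of $K$ (precisely where $a<d/2$ enters) to bound $\En[K_i^2]\asymp h^d$, and Bernstein for the bad event---this is Lemma~\ref{lem:variance}. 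One inconsequential slip: your aside ``$\sigma_\xi^2 \max_i W_i \le \sigma_\xi^2 \sum_i W_i^2$'' has the inequality reversed (in fact $\sum_i W_i^2 \le \max_i W_i$ when $\sum_i W_i=1$), but since you correctly proceed to bound $\sum_i W_i^2$ this does not affect the argument.
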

\begin{theorem}
	\label{thm:beta12}
	Assume that $f\in\Sigma(\beta,L_f)$ for $\beta\in(1,2]$, $L_f>0$. Let Assumptions $(A1)$ and $(A2)$ be satisfied, and $0<a<d/2$. Assume in addition that $p\in \Sigma(\beta-1,L_p)$, $L_p>0$. Then for any fixed $x_0\in\reals^d$ in the support of $p$ the estimator \eqref{eq:NW_def_precise} with kernel \eqref{eq:def_our_kernel} and bandwidth $h=n^{-\frac{1}{2\beta+d}}$ satisfies $$\En[(f_n(x_0)-f(x_0))^2]\leq C n^{-\frac{2\beta}{2\beta+d}}$$
	where $C>0$ is a constant that does not depend on $n$.	
	\end{theorem}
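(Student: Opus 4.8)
The plan is a bias--variance analysis of the ratio $f_n(x_0)$, complicated by the facts that both its numerator and denominator are random and that the kernel blows up near $x_0$; the singularity will be controlled by the hypothesis $a<d/2$, which is exactly what makes $K$ and $\norm{\cdot}K(\cdot)$ square-integrable against the design after the rescaling $u=(x-x_0)/h$. Write $K_i=K((x_0-X_i)/h)$, $S=\sum_{i=1}^n K_i$, $w_i=K_i/S$, $\xi_i=Y_i-f(X_i)$. Since $\mathbb{P}(X_i=x_0)=0$, almost surely $x_0\notin\{X_i\}$, so on $\{S>0\}$ we have $f_n(x_0)=\sum_i w_iY_i$ and, because $\sum_i w_i=1$,
\[
f_n(x_0)-f(x_0)=\underbrace{\textstyle\sum_i w_i\xi_i}_{W}+\underbrace{\textstyle\sum_i w_i(f(X_i)-f(x_0))}_{B}.
\]
On $\{S=0\}$ the estimate is $0$; by (A2) and $x_0\in\mathrm{supp}(p)$, $q:=\mathbb P(\norm{X-x_0}\le h)\ge c_0h^d$ for small $h$, so $\mathbb P(S=0)\le(1-q)^n\le e^{-c_0nh^d}$ and this event contributes a negligible $f(x_0)^2e^{-c_0nh^d}$ to the risk. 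Hence it suffices to bound $\En[W^2\ind{S>0}]$ and $\En[B^2\ind{S>0}]$. I would work on the high-probability event $\mathcal E:=\{S\ge cnh^d\}$: since $K_i\ge1$ whenever $\norm{X_i-x_0}\le h$, $S$ stochastically dominates a $\mathrm{Binomial}(n,q)$ count, so a Chernoff bound yields $\mathbb P(\mathcal E^c)\le e^{-c'nh^d}$, a tail that (as $nh^d=n^{2\beta/(2\beta+d)}\to\infty$) beats every power of $n$.

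For the variance term, condition on $X_{1:n}$: the $\xi_i$ are then independent, centered, with conditional second moment $\le\sigma_\xi^2$, so $\En[W^2\mid X_{1:n}]\le\sigma_\xi^2\sum_iw_i^2=\sigma_\xi^2\sum_iK_i^2/S^2$. On $\mathcal E$ I would lower-bound the denominator by $cnh^d$ and take an \emph{unconditional} second moment, $\En[\sum_iK_i^2/S^2\,\ind{\mathcal E}]\le(cnh^d)^{-2}n\,\En[K_1^2]$, where rescaling gives $\En[K_1^2]=h^d\int_{\norm{u}\le1}\norm{u}^{-2a}p(x_0+hu)\,du\le p_{\max}h^d\int_{\norm{u}\le1}\norm{u}^{-2a}\,du$, finite precisely because $a<d/2$; this yields $\lesssim1/(nh^d)$. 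Off $\mathcal E$ the crude bound $\sum_iK_i^2\le S^2$ gives $\sum_iw_i^2\le1$ and contribution $\le\mathbb P(\mathcal E^c)$. Thus $\En[W^2\ind{S>0}]\lesssim 1/(nh^d)$, which equals $n^{-2\beta/(2\beta+d)}$ for the prescribed $h$.

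The bias term is the crux. On $\{K_i\ne0\}$ one has $\norm{X_i-x_0}\le h$, so \eqref{eq:holder12} (here $f\in\Sigma(\beta,L_f)$, $\beta\in(1,2]$) gives $f(X_i)-f(x_0)=\inner{\nabla f(x_0),X_i-x_0}+R_i$ with $|R_i|\le L_fh^\beta$, so $B=\inner{\nabla f(x_0),V}+\sum_iw_iR_i$ with $V:=\sum_i(X_i-x_0)K_i/S$; since $|\sum_iw_iR_i|\le L_fh^\beta$, everything reduces to $\En[\norm{V}^2\ind{S>0}]\lesssim h^{2\beta}$, which is subtler than the $\beta\le1$ case because the trivial bound $\norm{V}\le h$ is too weak. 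Put $T:=SV=\sum_i(X_i-x_0)K_i$, an i.i.d.\ sum with mean $n\mu$, $\mu:=\En[(X_1-x_0)K_1]$. The key points: (i) $u\mapsto u\norm{u}^{-a}$ is odd on the symmetric ball, so after rescaling the $p(x_0)$-part of $\mu$ integrates to zero, $\mu=h^{d+1}\int_{\norm{u}\le1}u\norm{u}^{-a}[p(x_0+hu)-p(x_0)]\,du$, and the added hypothesis $p\in\Sigma(\beta-1,L_p)$ bounds $|p(x_0+hu)-p(x_0)|\le L_p(h\norm{u})^{\beta-1}$, giving $\norm{\mu}\lesssim h^{d+\beta}$; (ii) $\En\norm{(X_1-x_0)K_1}^2=h^{d+2}\int_{\norm{u}\le1}\norm{u}^{2-2a}p(x_0+hu)\,du\lesssim h^{d+2}$. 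Hence $\En\norm{T}^2=\En\norm{T-n\mu}^2+n^2\norm{\mu}^2\lesssim nh^{d+2}+n^2h^{2d+2\beta}$. On $\mathcal E$, $\norm{V}^2=\norm{T}^2/S^2\le\norm{T}^2/(cnh^d)^2$, so $\En[\norm{V}^2\ind{\mathcal E}]\lesssim h^{2\beta}+h^2/(nh^d)$, and $h^2/(nh^d)=n^{-(2\beta+2)/(2\beta+d)}\le n^{-2\beta/(2\beta+d)}=h^{2\beta}$; off $\mathcal E$, $\norm{T}\le hS$ gives $\norm{V}\le h$ and contribution $\le h^2\mathbb P(\mathcal E^c)$. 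Therefore $\En[B^2\ind{S>0}]\lesssim(\norm{\nabla f(x_0)}^2+L_f^2)h^{2\beta}$.

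Putting the pieces together, $\En[(f_n(x_0)-f(x_0))^2]\lesssim(nh^d)^{-1}+h^{2\beta}+(\text{exponentially small terms})$, and $h=n^{-1/(2\beta+d)}$ balances the two leading terms at $n^{-2\beta/(2\beta+d)}$, which is the assertion. The hard part will be the bias analysis: unlike the $\beta\le1$ regime of Theorem~\ref{thm:beta01}, where $|f(X_i)-f(x_0)|\le L_fh^\beta$ on the ball makes $B$ immediately $O(h^\beta)$, here one must extract the cancellation of the first-order term $\inner{\nabla f(x_0),V}$ from the radial symmetry of $K$ and quantify the residual through the smoothness of the design density $p$ — this is exactly why the extra hypothesis $p\in\Sigma(\beta-1,L_p)$ is needed — all the while keeping the random denominator $S$ under control via the event $\mathcal E$.
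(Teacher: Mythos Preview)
Your proof is correct and shares the paper's overall architecture: the same bias--variance split, the same control of the variance via square-integrability of $K$ (this is the role of $a<d/2$) together with a Bernstein/Chernoff bound showing $S\gtrsim nh^d$ except on an event of exponentially small probability, and the same mechanism for the bias---Taylor-expand $f$, exploit the radial symmetry of $K$ to annihilate the linear term against the constant density value $p(x_0)$, and invoke $p\in\Sigma(\beta-1,L_p)$ to control the residual. The one genuine technical difference is in how the bias is executed. You decouple numerator and denominator by restricting to $\mathcal{E}=\{S\ge cnh^d\}$, reducing everything to the marginal second-moment calculation $\En\|T\|^2=n\operatorname{Var}+n^2\|\mu\|^2$ for the i.i.d.\ sum $T=\sum_i(X_i-x_0)K_i$, and apply the oddness argument only to the single-sample mean $\mu$. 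The paper instead keeps the random denominator, expands $b^2(0)=\En\sum_{i,j}G_iG_j$ with $G_i=(f(X_i)-f(0))K_h(X_i)/S$, and applies the symmetry argument \emph{conditionally}: for fixed $(X_k)_{k\ne i,j}$ the weight $A(x_i,x_j)=K_h(x_i)K_h(x_j)/S^2$ is even in $x_i$, so the $p(0)$-part of the linear term integrates to zero inside the ratio. The diagonal $\sum_i\En G_i^2$ is then absorbed into the variance quantity $\sigma_X^2$, and the off-diagonal is bounded by $h^{2\beta}$ times $\En[\sum_{i\ne j}K_h(X_i)K_h(X_j)/S^2]\le 1$. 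Your route is a bit more elementary and generates the harmless extra term $h^2/(nh^d)\le h^{2\beta}$ from the crude lower bound on $S$; the paper's route avoids that slack at the price of the conditional-evenness observation. Either way the balancing $h=n^{-1/(2\beta+d)}$ gives the stated rate.
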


The pointwise mean squared error (MSE) bounds of Theorems \ref{thm:beta01} and \ref{thm:beta12} immediately imply the integrated MSE with respect to the marginal distribution of $X$:
\begin{align*}
	\En\norm{f_n-f}^2_{L_2(P_X)} \leq C n^{-\frac{2\beta}{2\beta+d}},
\end{align*}
assuming that $f$ is bounded on the support of the marginal density $p$.

\section{Visualization}
\label{sec:visualization}

The figures below show interpolations with kernels \eqref{eq:def_our_kernel} and \eqref{eq:def_katkovnik_kernel}. While both achieve optimal rates of convergence in this simple one-dimensional problem, the latter kernel appears to be less irregular. Indeed, unlike \eqref{eq:def_our_kernel}, kernels \eqref{eq:def_katkovnik_kernel} and \eqref{eq:def_cap_kernel} produce continuous functions.

\begin{figure}[H]
  \centering	
    \includegraphics[width=.65\textwidth]{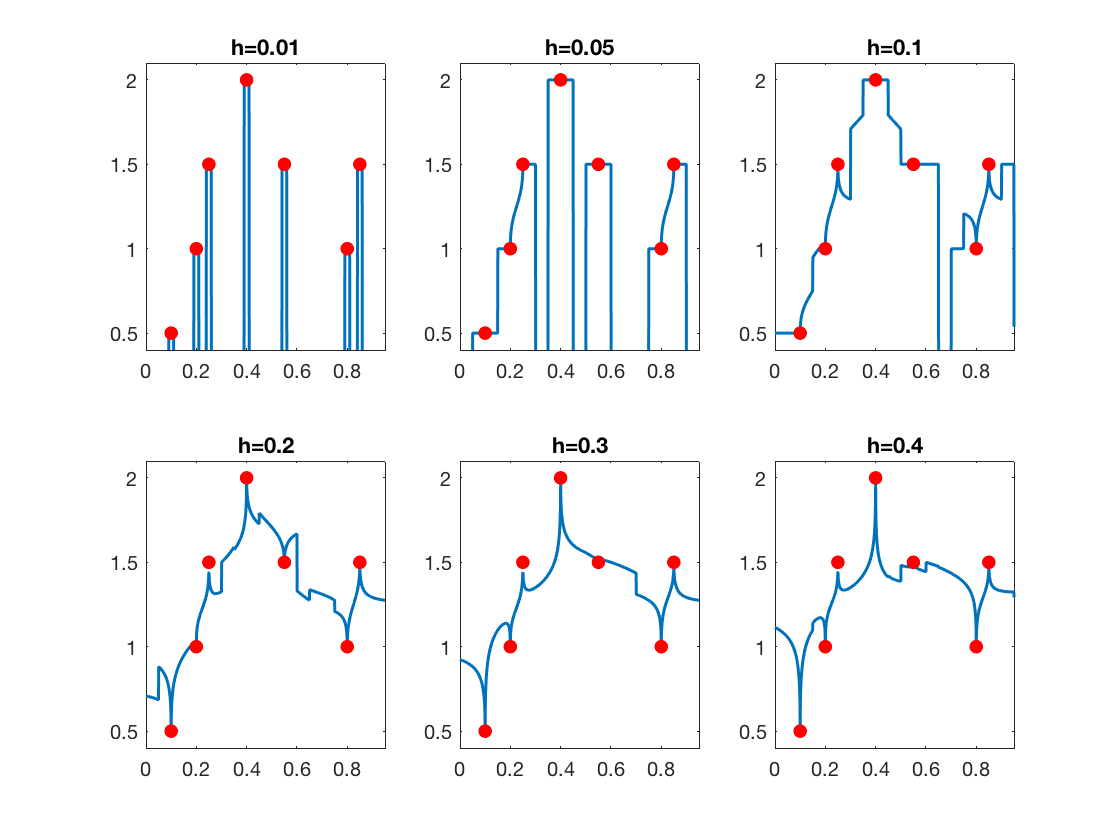}
  \caption{Interpolation with $K\left(u\right) = \norm{u}^{-a} \ind{\norm{u}\leq 1}$, $a=0.49$, and various values of $h$.}
  \label{fig:graphics_i1}
\end{figure}
\begin{figure}[H]
  \centering	
    \includegraphics[width=.65\textwidth]{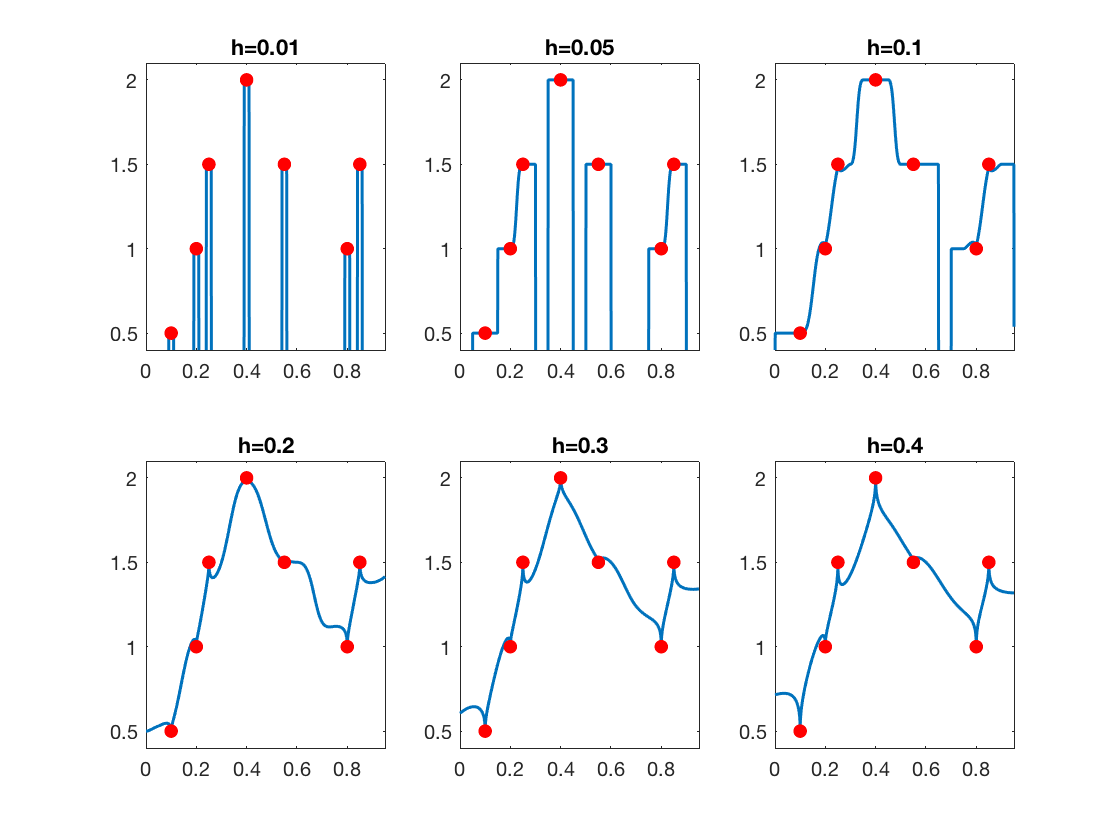}
  \caption{Interpolation with $K\left(u\right) = \norm{u}^{-a} [1-\norm{u}]^2_+$, $a=0.49$, and various values of $h$.}
  \label{fig:graphics_i2}
\end{figure}
We now compare Figures~\ref{fig:graphics_i1} and \ref{fig:graphics_i2} to those with a non-singular kernel. We remark that choices of bandwidth $h$ differ depending on the kernel, and direct comparisons for the same value across kernels might not be meaningful.
\begin{figure}[H]
  \centering	 	\includegraphics[width=.65\textwidth]{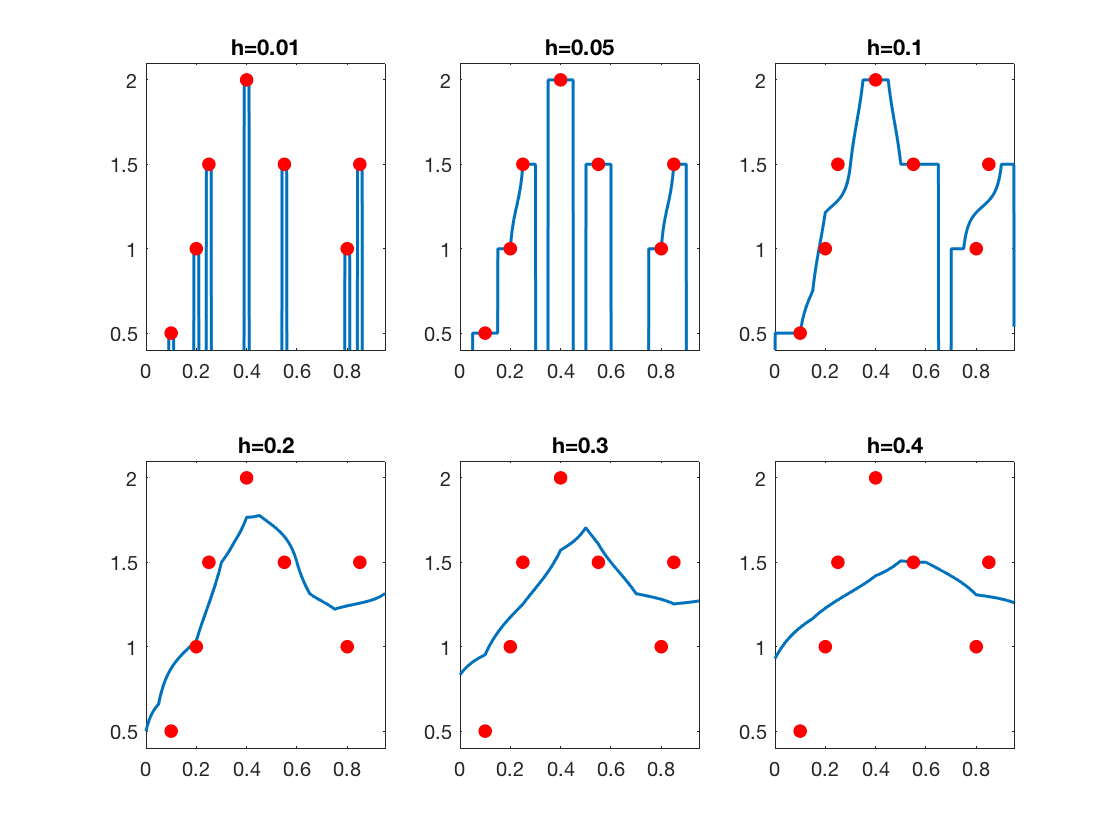}
  \caption{Comparison: non-singular Epanechnikov kernel $K\left(u\right) = (3/4)(1-\norm{u}^2) \ind{\norm{u}\leq 1}$.}
  \label{fig:graphics_epanechnikov}
\end{figure}
\begin{figure}[H]
  \centering	 	\includegraphics[width=.65\textwidth]{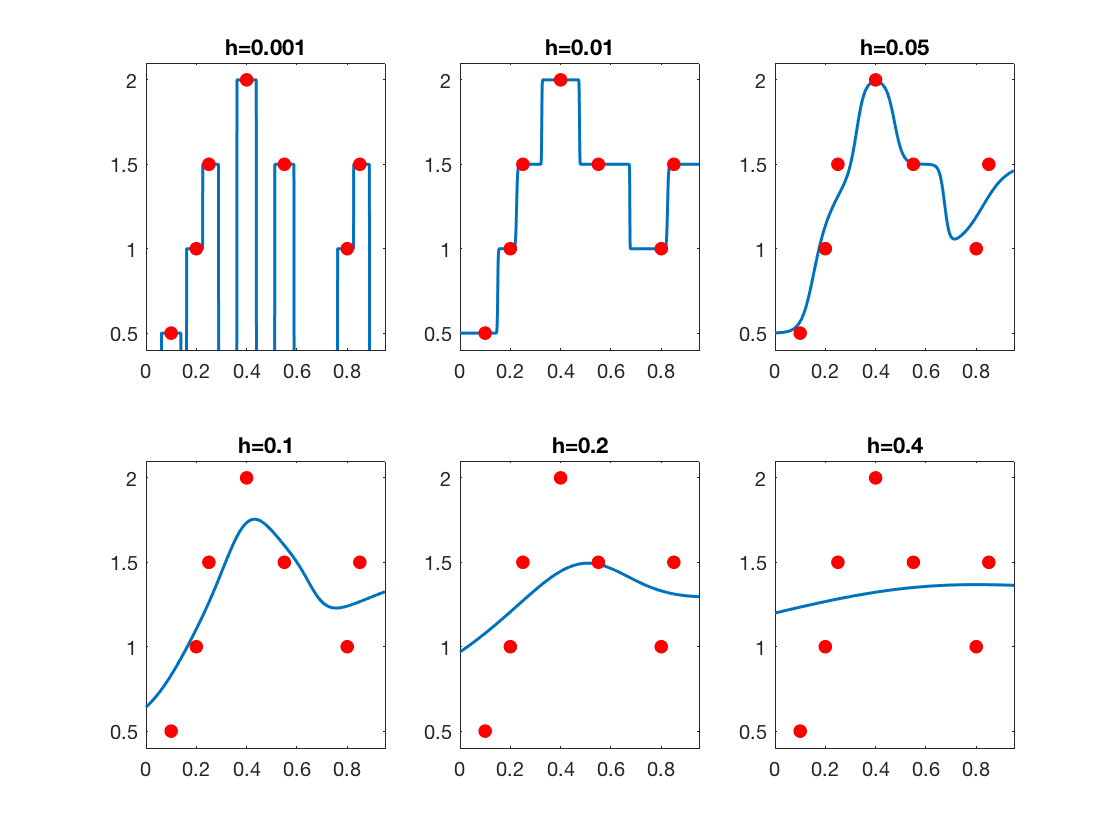}
  \caption{Comparison: non-singular Gaussian kernel $K\left(u\right) = (1/\sqrt{2\pi})\exp\left\{-\norm{u}^2\right\}$. Note the altered choices of $h$.}
  \label{fig:graphics_gaussian}
\end{figure}
Figure~\ref{fig:graphics_i3} below shows a comparison between the interpolating kernel \ref{eq:def_katkovnik_kernel} and the Gaussian kernel for binary-valued data. We observe the more global effect that each point has on the behavior of the solution with the Gaussian kernel, in comparison to the singular kernel. Understanding properties of the plug-in classifier $\text{sign}(f_n)$ under various margin conditions appears to be an interesting direction of further research.
\begin{figure}[H]
  \centering	 	\includegraphics[width=.65\textwidth]{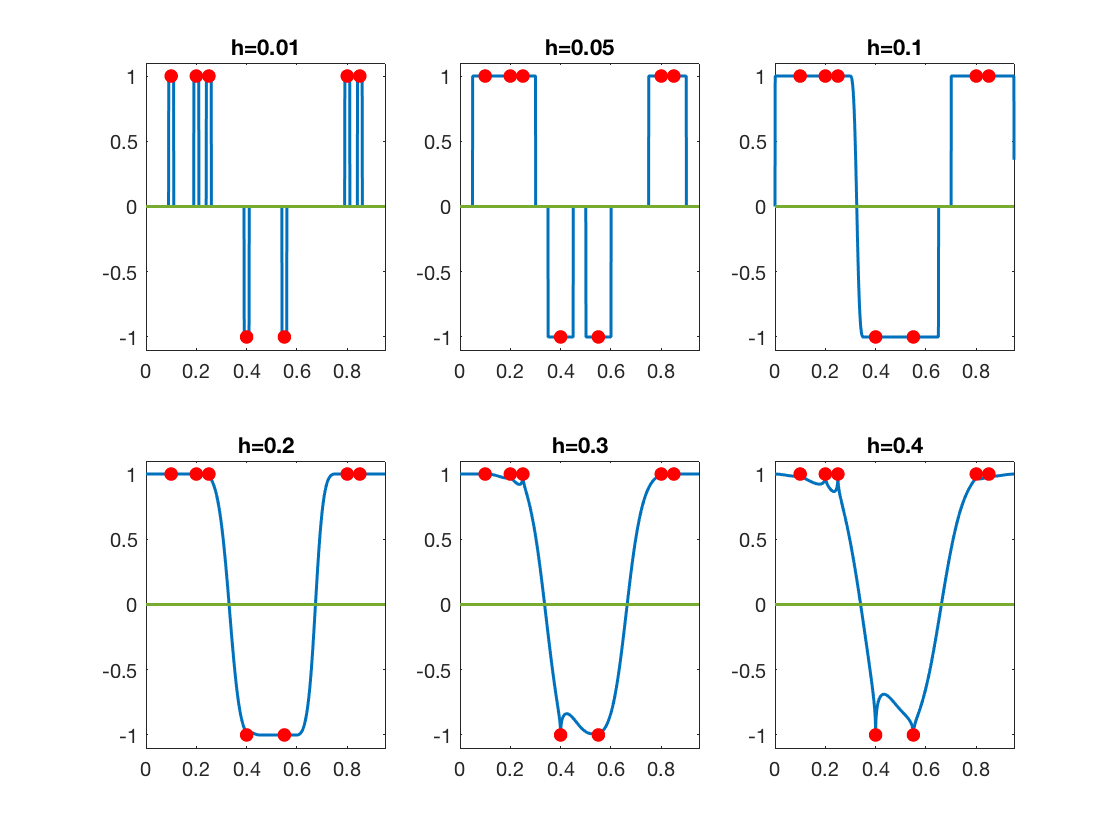}
  \caption{Interpolation with $K\left(u\right) = \norm{u}^{-a} [1-\norm{u}]^2_+$, $a=0.49$, for binary-valued $Y$.}
  \label{fig:graphics_i3}
\end{figure}
\begin{figure}[H]
  \centering	 	\includegraphics[width=.65\textwidth]{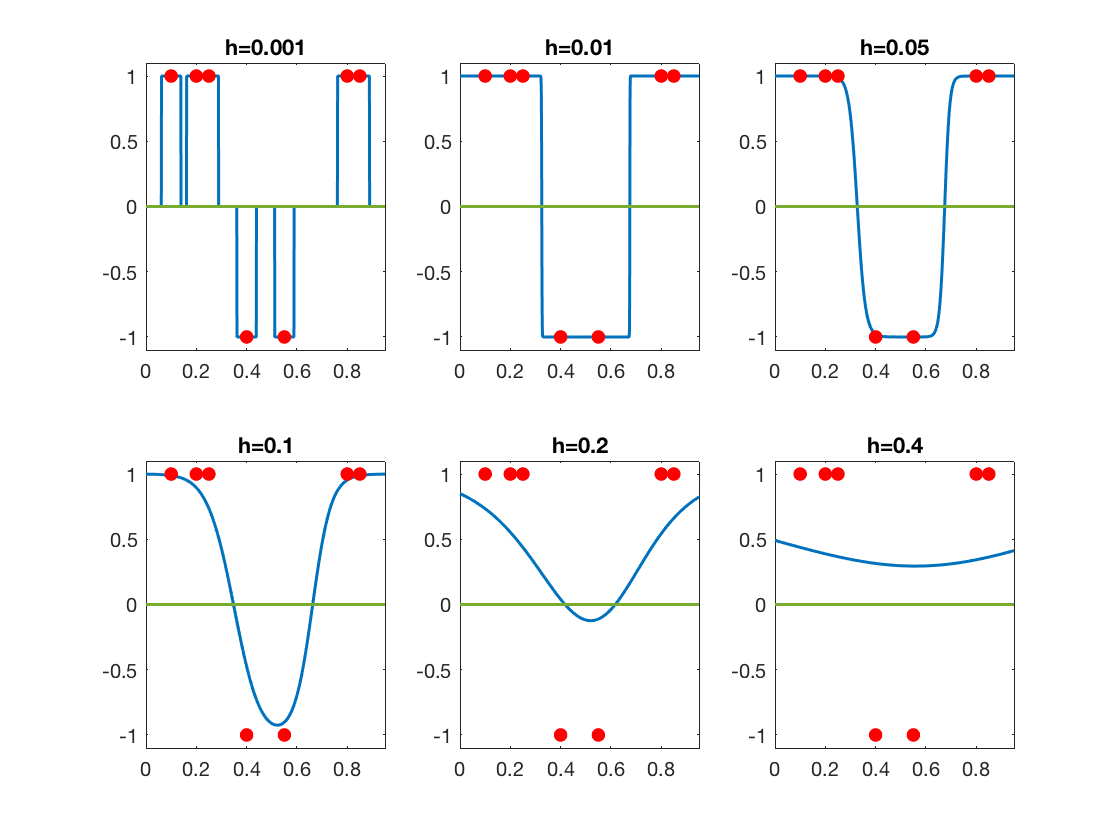}
  \caption{Comparison: non-singular Gaussian kernel $K\left(u\right) = (1/\sqrt{2\pi})\exp\left\{-\norm{u}^2\right\}$ for binary-valued $Y$. Note the altered choices of $h$.}
  \label{fig:graphics_gaussian_binary}
\end{figure}

\section{Proofs}
\label{sec:proofs}
Without loss of generality, consider the problem of estimating $f(x_0)$ at $x_0=0$, assuming it is in the support of $p$ and $|f(x_0)|<\infty$. 

Consider the event 
$$\mathcal{E} = \left\{\sum_{i=1}^n K_h(X_i) \neq 0\right\} = \left\{\exists i=1,\ldots,n: \norm{X_i}\leq h\right\}$$ 
and observe that
$$P\left(\bar{\mathcal{E}}\right)\leq \left(1-Cp_{\min}h^d\right)^n\leq \exp\left\{-Cp_{\min}nh^d\right\}$$
for a constant $C>0$ that does not depend on $n$. On the event $\bar{\mathcal{E}}$, we have $f_n(0)=0$ and thus the contribution to expected risk is at most $M_{\mathcal{E}} = f(0)^2\exp\left\{-Cp_{\min}nh^d\right\},$
a lower-order term compared to the remaining calculations. 

On the event $\mathcal{E}$, the estimator $f_n(0)$ is equal to
\begin{align*}
 	\bar f_n(0) = \frac{\sum_{i=1}^n Y_i K_h(X_i)}{\sum_{i=1}^n K_h(X_i)}
\end{align*}
(modulo an event of zero probability with respect to the joint distribution of $X_1,\ldots,X_n$), where
\begin{align*}
	K_h (x) \deq K(x/h).
\end{align*}
Set $\xi_i=Y_i-f(X_i)$. Let $\En_Y$ denote the expectation with respect to $Y_1,\ldots,Y_n$, conditional on $X_1,\ldots,X_n$. We have the following ``bias-variance'' decomposition
\begin{align*}
	\En[(f_n(0)-f(0))^2] &\le \En[(\bar f_n(0)-\En_Y \bar f_n(0) + \En_Y \bar f_n(0) - f(0))^2\ind{\mathcal{E}}] + M_{\mathcal{E}}\\
	&= \En[(\bar f_n(0)-\En_Y \bar f_n(0))^2 \ind{\mathcal{E}}] + \En[(\En_Y \bar f_n(0) - f(0))^2 \ind{\mathcal{E}}] + M_{\mathcal{E}} ~ .
\end{align*}

It holds that, on the event $\mathcal{E}$,
\begin{align*}
	\En_Y \bar f_n(0) = \frac{\sum_{i=1}^n f(X_i) K_h(X_i)}{\sum_{i=1}^n K_h(X_i)}
\end{align*}
and, hence, the variance term is
\begin{align}
	\label{eq:initial_var_calc}
	\sigma^2(0) \deq \En[(\bar f_n(0)-\En_Y \bar f_n(0))^2\ind{\mathcal{E}}] = \En\left[\left(\frac{\sum_{i=1}^n \xi_i K_h(X_i)}{\sum_{i=1}^n K_h(X_i)}\right)^2 \ind{\mathcal{E}} \right] \leq \sigma_\xi^2 \sigma_X^2 , 
\end{align}
where 
\begin{align*}
	\sigma_X^2 \deq n \En \left[  \frac{K_h^2(X_1)}{\left(\sum_{i=1}^n K_h(X_i)\right)^2} \ind{\mathcal{E}}\right].
\end{align*}
On the other hand, the bias\footnote{To be precise, this term includes variance due to random $X$, as will be clear from Lemma~\ref{lem:bias_holder_12}.} is
\begin{align}
	\label{eq:bias_eq}
	b^2(0) \deq \En[(\En_Y \bar f_n(0) - f(0))^2 \ind{\mathcal{E}} ] &= \En\left[\left(\frac{\sum_{i=1}^n (f(X_i)-f(0)) K_h(X_i)}{\sum_{i=1}^n K_h(X_i)}\right)^2 \ind{\mathcal{E}}  \right].
\end{align}
The following lemmas control each of the above expressions under various assumptions on $f$ and the marginal density $p$. We will denote by $C$ positive constants that can vary from line to line.

\subsection{Bounding the Variance}
\begin{lemma}
	\label{lem:variance}
	Let Assumptions (A1) and (A2) hold. Then,
	\begin{align}
		\sigma^2(0) \leq  \frac{C\sigma_\xi^2}{nh^d}.
	\end{align}
\end{lemma}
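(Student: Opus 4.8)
The plan is to start from the bound $\sigma^2(0)\le\sigma_\xi^2\sigma_X^2$ recorded in \eqref{eq:initial_var_calc}, so that it suffices to show $\sigma_X^2\le C/(nh^d)$ for some $C$ independent of $n$. Writing the kernel out, $K_h(x)=h^a\norm{x}^{-a}\ind{\norm{x}\le h}$, so $K_h(x)\ge\ind{\norm{x}\le h}$ and the random denominator $D\deq\sum_{i=1}^n K_h(X_i)$ is bounded below by $N\deq\sum_{i=1}^n\ind{\norm{X_i}\le h}$, the number of sample points in the ball $B_h\deq\{x:\norm{x}\le h\}$. I would rely on two facts, both consequences of (A2): (i) $p_h\deq P(\norm{X_1}\le h)\ge c_0 h^d$, which is the same lower bound on the mass of a small ball around $x_0$ already used to control $P(\bar{\mathcal{E}})$; and (ii)
\[
\En[K_h^2(X_1)]=h^{2a}\!\int_{B_h}\norm{x}^{-2a}p(x)\,dx\;\le\;p_{\max}\,h^{2a}\!\int_{B_h}\norm{x}^{-2a}\,dx\;=\;C_2 h^d,
\]
where the last integral is finite exactly because $a<d/2$; this is the only point at which the hypothesis $a<d/2$ is needed.

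The obstacle is that $D$ is random and can be small — the crude bound $D\ge K_h(X_1)$ only yields $\sigma_X^2\le n$ — so I would split according to whether $D$ is comparable to its typical size. Put $N'\deq\sum_{i=2}^n\ind{\norm{X_i}\le h}\sim\mathrm{Binomial}(n-1,p_h)$, so $D\ge N'$, and set $m\deq\tfrac12\En[N']=\tfrac12(n-1)p_h$. Then
\[
\sigma_X^2 \;=\; n\,\En\!\left[\frac{K_h^2(X_1)}{D^2}\,\ind{\mathcal{E}}\right] \;\le\; \underbrace{n\,\En\!\left[\frac{K_h^2(X_1)}{D^2}\,\ind{N'\ge m}\right]}_{\text{(I)}} \;+\; \underbrace{n\,P(N'<m)}_{\text{(II)}},
\]
where for (II) I used that on $\mathcal{E}$ the fraction is at most $1$ since $D\ge K_h(X_1)$. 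On $\{N'\ge m\}$ one has $D\ge N'\ge m=\tfrac12(n-1)p_h$, so, dropping the indicator and invoking (ii) then (i),
\[
\text{(I)}\;\le\;\frac{4n\,\En[K_h^2(X_1)]}{(n-1)^2 p_h^2}\;\le\;\frac{4 n\, C_2\, h^d}{(n-1)^2\, c_0^2\, h^{2d}}\;\le\;\frac{C}{nh^d}.
\]
For (II), the multiplicative Chernoff bound for the lower tail of a binomial gives $P(N'<m)\le\exp(-\En[N']/8)\le\exp\!\big(-\tfrac{c_0}{8}(n-1)h^d\big)$, whence $\text{(II)}\le n\exp(-c\,nh^d)$ for a suitable constant $c>0$.

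Finally, since $\sup_{t>0}t^2e^{-ct}<\infty$, the term $n\exp(-c\,nh^d)$ is at most $C/(nh^d)$ whenever $nh^d$ stays bounded below by a fixed constant — in particular for $h=n^{-1/(2\beta+d)}$ as in Theorems~\ref{thm:beta01}--\ref{thm:beta12}, where $nh^d=n^{2\beta/(2\beta+d)}\to\infty$. Adding (I) and (II) then gives $\sigma_X^2\le C/(nh^d)$ and hence $\sigma^2(0)\le C\sigma_\xi^2/(nh^d)$. I expect the main obstacle to be exactly the concentration of the random denominator $D$: one must argue that with overwhelming probability a number of order $nh^d$ of the $X_i$ land in $B_h$, so that $D$ is of order $nh^d$, while on the exponentially unlikely complement the bound by $1$ is enough. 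The only other quantitative input is that $\En[K_h^2(X_1)]$ is of the same order $h^d$ as $p_h$, which is precisely what $a<d/2$ buys by making $\norm{x}^{-2a}$ integrable at the origin in $\reals^d$.
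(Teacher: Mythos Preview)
Your argument is correct and follows essentially the same route as the paper: lower-bound the random denominator by the count of sample points in the ball of radius $h$, split on whether this count exceeds half its mean, use square-integrability of $K$ (via $a<d/2$) on the typical event, and bound the fraction by $1$ together with a binomial tail bound on the atypical event. The only cosmetic differences are that the paper splits on $N=\sum_{i=1}^n\ind{\norm{X_i}\le h}$ rather than your $N'$ and invokes Bernstein rather than Chernoff; also, the paper simply leaves the residual $n\exp(-Cnh^d)$ as a separate term to be balanced later, whereas your absorption of it into $C/(nh^d)$ relies on the specific bandwidth choice (your ``$\sup_{t>0}t^2e^{-ct}<\infty$'' line is not quite the right inequality, since what is needed is boundedness of $n^2h^d e^{-cnh^d}$, but this does hold for $h=n^{-1/(2\beta+d)}$).
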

\begin{proof}
	Introduce the random variables 
	$$\eta_i = \ind{\norm{X_i}\leq h}.$$
	They are i.i.d. and follow the Bernoulli distribution with parameter
	$$\bar{p} \deq P(\norm{X_1} \leq h)\geq c_0 p_{\min} h^d $$
	where $c_0>0$ depends only on $d$. Then  
	\begin{align}
		\label{eq:two-split-variance}
		\sigma^2_X \le n \En\left[ \frac{ K_h^2(X_1)}{\left(\sum_{i=1}^n K_h(X_i)\right)^2}\, \ind{\sum_{i=1}^n \eta_i \leq \frac{n\bar{p}}{2}}  \ind{\mathcal{E}}\right] +  n \En\left[ \frac{4}{(n\bar{p})^2} K_h^2(X_1) \right]
	\end{align}
	where we have used the fact that
	$$K_h(X_i) \geq \eta_i,~~~ i=1,\ldots,n.$$
	Change of variables yields
	\begin{align}
		\label{eq:square_integrability}
		n\En [K_h^2(X_1) ] \leq nh^d p_{\max}\int_{\reals^d} K^2(u) du.
	\end{align}
	Since the kernel $K$ is radially symmetric and supported on the unit Euclidean ball, the last expression is  bounded from above by
	\begin{align*}
		C nh^d p_{\max} \int_{0}^1 r^{-2a} r^{d-1} dr \leq C_2 nh^d
	\end{align*}
	whenever $d-2a-1> -1$ (equivalently, $a<d/2$). Here $C,C_2$ are positive constants depending only on~$d$. It follows that 
	$$n\En\left[ \frac{4}{(n\bar{p})^2} K_h^2(X_1)\right] \leq \frac{4}{(c_0 p_{\min} nh^d)^2} C_2 nh^d \leq \frac{C}{nh^d}.$$
	To conclude the proof, we analyze the first term in \eqref{eq:two-split-variance}:
	\begin{align*}
		n\En\left[ \frac{K_h^2(X_1)}{\left(\sum_{i=1}^n K_h(X_i)\right)^2} \ \ind{\sum_{i=1}^n \eta_i \leq \frac{n\bar{p}}{2}} \ind{\mathcal{E}} \right] 
		&\leq n P\left( \sum_{i=1}^n \eta_i \leq \frac{n\bar{p}}{2} \right) = n P\left( \sum_{i=1}^n \eta_i - n\bar{p}\leq \frac{n\bar{p}}{2} \right).
	\end{align*}
	By Bernstein's inequality, the last expression is at most
	\begin{align*}
		n\exp\left\{ -\frac{(n\bar{p}/2)^2}{2(n\bar{p}(1-\bar{p}) + n\bar{p}/3)}\right\} \leq n\exp\left\{-\frac{3n\bar{p}}{32}\right\} \leq n\exp\left\{-C nh^d \right\}.
	\end{align*}
\end{proof}

\subsection{Bounding the Bias}

\begin{lemma}
	\label{lem:bias_holder_01}
	Let $\beta\in(0,1]$, $L_f>0$, and assume that $f\in\Sigma(\beta,L_f)$. Then
	$$b^2(0) \leq L_f^2  h^{2\beta}.$$
\end{lemma}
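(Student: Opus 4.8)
The plan is to exploit the fact that the kernel \eqref{eq:def_our_kernel} is supported on the unit Euclidean ball, so that $K_h(x) = K(x/h)$ vanishes whenever $\norm{x} > h$. Consequently, in both the numerator and the denominator of the fraction appearing in \eqref{eq:bias_eq}, only the indices $i$ with $\norm{X_i}\le h$ contribute. For such indices, the $(\beta,L_f)$-H\"older condition \eqref{eq:holder01} gives $|f(X_i)-f(0)|\le L_f\norm{X_i}^\beta \le L_f h^\beta$.

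The key step is then a termwise bound combined with the triangle inequality. On the event $\mathcal{E}$ the denominator $\sum_{i=1}^n K_h(X_i)$ is strictly positive, and
\begin{align*}
	\left|\sum_{i=1}^n (f(X_i)-f(0))K_h(X_i)\right| \le \sum_{i=1}^n |f(X_i)-f(0)|\,K_h(X_i) \le L_f h^\beta \sum_{i=1}^n K_h(X_i),
\end{align*}
where in the last inequality we used that $K_h(X_i)\ge 0$ and that $K_h(X_i)>0$ forces $\norm{X_i}\le h$. Dividing by the (positive) denominator shows that the fraction in \eqref{eq:bias_eq} is bounded in absolute value by $L_f h^\beta$ on $\mathcal{E}$.

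Squaring this pointwise bound and taking expectations yields $b^2(0) \le L_f^2 h^{2\beta}\,\En[\ind{\mathcal{E}}] \le L_f^2 h^{2\beta}$, which is the claim. There is no real obstacle here: the entire content is that the singular kernel, despite blowing up at the origin, still has compact support, so the bias is controlled exactly as for an ordinary compactly supported smoothing kernel — the singularity plays no role in this estimate (it is the variance term, Lemma~\ref{lem:variance}, where the condition $a<d/2$ is needed).
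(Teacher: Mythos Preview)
Your proposal is correct and matches the paper's own proof essentially line for line: both use the compact support of $K_h$ to restrict to $\norm{X_i}\le h$, apply the H\"older bound $|f(X_i)-f(0)|\le L_f h^\beta$, and cancel the weighted sum against the denominator on $\mathcal{E}$. Your write-up is in fact slightly more explicit than the paper's, and your closing remark that the singularity is irrelevant for the bias (only the variance needs $a<d/2$) is a useful observation.
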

\begin{proof}
	Since $f\in\Sigma(\beta,L_f)$ we have, on the event $\mathcal{E}$,
	\begin{align*}
		\left|\frac{\sum_{i=1}^n (f(X_i)-f(0)) K_h(X_i)}{\sum_{i=1}^n K_h(X_i)}\right| \le 
		\left|\frac{\sum_{i=1}^n L_f\norm{X_i}^\beta K_h(X_i)}{\sum_{i=1}^n K_h(X_i)}\right|
		&\leq L_f h^{\beta}.
	\end{align*}
	The last step holds because the kernel $K_h$ is zero outside of the Euclidean ball of radius $h$. 
\end{proof}

Lemma~\ref{lem:bias_holder_01} can be extended to smoothness $\beta\in(1,2]$ under an additional assumption on the marginal density.
\begin{lemma}
	\label{lem:bias_holder_12}
	Let $\beta\in(1,2]$, $L_f>0$, and $f\in\Sigma(\beta,L_f)$. Assume that  the density $p$ of the marginal distribution of $X$ satisfies $p\in \Sigma(\beta-1,L_p)$, and $p(x)\geq p_{\min}>0$ for all $x$ in the support of $p$. Then
	$$b^2(0) \leq (L_f + \norm{\nabla f(0)} L_p p_{\min}^{-1}) h^{2\beta} + \sigma_X^2.$$
\end{lemma}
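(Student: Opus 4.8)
The plan is to follow the proof of Lemma~\ref{lem:bias_holder_01}, the only new difficulty being that for $\beta>1$ the first-order Taylor term of $f$ no longer vanishes; it will be controlled using the radial symmetry of $K$ together with the assumed H\"older smoothness of $p$. Since $f\in\Sigma(\beta,L_f)$ with $\beta\in(1,2]$, I write $f(X_i)-f(0)=\inner{\nabla f(0),X_i}+R_i$ with $|R_i|\le L_f\norm{X_i}^\beta$, so $|R_i|\le L_f h^\beta$ whenever $K_h(X_i)\neq 0$. Plugging this into \eqref{eq:bias_eq}, on the event $\mathcal{E}$ the bias ratio equals $\inner{\nabla f(0),V}+B$, where $V\deq\left(\sum_{i=1}^n X_i K_h(X_i)\right)/\left(\sum_{i=1}^n K_h(X_i)\right)$ and $|B|\le L_f h^\beta$ by the same ``kernel supported in a ball of radius $h$'' argument as in Lemma~\ref{lem:bias_holder_01}.

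Next I would center $V$ at its population counterpart $c\deq \En[X_1K_h(X_1)]/\En[K_h(X_1)]\in\reals^d$ and bound $\norm{c}$. Because $K$ is radially symmetric, $\int_{\reals^d}u\,K(u)\,du=0$, so a change of variables gives
\[
\En[X_1K_h(X_1)]=h^{d+1}\int u\,K(u)\,p(hu)\,du=h^{d+1}\int u\,K(u)\,\big(p(hu)-p(0)\big)\,du .
\]
Since $p\in\Sigma(\beta-1,L_p)$ with $\beta-1\in(0,1]$, $|p(hu)-p(0)|\le L_p h^{\beta-1}\norm{u}^{\beta-1}$, and $\int\norm{u}^\beta K(u)\,du<\infty$ for $a<d/2$; hence $\norm{\En[X_1K_h(X_1)]}\le C L_p h^{d+\beta}$. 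Combined with $\En[K_h(X_1)]\ge \bar p\ge c_0 p_{\min} h^d$ (from $K_h\ge\eta$ and the bound on $\bar p$ in the proof of Lemma~\ref{lem:variance}), this yields $\norm{c}\le C L_p p_{\min}^{-1} h^\beta$, so the deterministic term $\inner{\nabla f(0),c}$ contributes at most (a constant times) $\norm{\nabla f(0)}L_p p_{\min}^{-1} h^\beta$ to the bias.

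The remaining---and main---task is to bound the stochastic fluctuation $\En[\norm{V-c}^2\ind{\mathcal{E}}]$, which I would do exactly as in Lemma~\ref{lem:variance}. The crucial gain from centering at $c$ is that the vectors $(X_i-c)K_h(X_i)$ are i.i.d.\ with mean zero, so the cross terms vanish and $\En\norm{\sum_{i=1}^n(X_i-c)K_h(X_i)}^2=n\,\En[\norm{X_1-c}^2K_h^2(X_1)]\le C n h^{d+2}$ (using $\norm{c}\le h$ and $\En[\norm{X_1}^2K_h^2(X_1)]\le C h^{d+2}$ by the change of variables in \eqref{eq:square_integrability}, finite for $a<d/2$). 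Reusing the split of Lemma~\ref{lem:variance}: on the event $\{\sum_{i=1}^n\eta_i\ge n\bar p/2\}$ one has $\sum_{i=1}^n K_h(X_i)\ge n\bar p/2\ge C p_{\min} n h^d$, so this part of $\En[\norm{V-c}^2\ind{\mathcal{E}}]$ is at most $C n h^{d+2}/(p_{\min}n h^d)^2=C h^2/(p_{\min}^2 n h^d)$; on the complement one uses the crude bound $\norm{V-c}\le 2h$ together with $P(\sum_{i=1}^n\eta_i<n\bar p/2)\le\exp\{-Cnh^d\}$ from Bernstein's inequality, a lower-order term. Hence $\En[\norm{V-c}^2\ind{\mathcal{E}}]=O(h^2/(nh^d))$, which is of the order of $\sigma_X^2$. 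Finally, writing $b^2(0)=\En\big[\big(B+\inner{\nabla f(0),c}+\inner{\nabla f(0),V-c}\big)^2\ind{\mathcal{E}}\big]$ and applying $(u+v+w)^2\le 3(u^2+v^2+w^2)$ gives $b^2(0)\le C(L_f+\norm{\nabla f(0)}L_p p_{\min}^{-1})^2 h^{2\beta}+C\norm{\nabla f(0)}^2\sigma_X^2$, i.e.\ the asserted bound up to the constants. The one genuinely delicate point is obtaining the $h^{2\beta}$ rate (not merely $h^{2}$) for the linear term: a naive triangle inequality on $\norm{V}$ only gives $O(h)$, so one really must (i) exploit $\int uK(u)\,du=0$ and the smoothness of $p$ to see $\norm{c}=O(h^\beta)$, and (ii) recognize that the fluctuation of $V$ about $c$ is governed by the same second-moment computation as the variance term, hence $O(1/(nh^d))=O(h^{2\beta})$ at the optimal bandwidth.
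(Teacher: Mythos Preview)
Your argument is correct and yields the stated bound (up to constants, as you note). It is, however, organized differently from the paper's proof, and the comparison is instructive. The paper expands $b^2(0)=\En\bigl[\sum_{i,j}G_iG_j\,\ind{\mathcal{E}}\bigr]$ with $G_i=(f(X_i)-f(0))K_h(X_i)/\sum_k K_h(X_k)$, treats the diagonal $\sum_i \En[G_i^2\,\ind{\mathcal{E}}]$ as the $\sigma_X^2$ contribution, and for each off-diagonal pair takes the conditional expectation over $(X_i,X_j)$, inserts the Taylor expansion, and uses the radial symmetry of $K_h$ (hence of the weight $A(x_i,x_j)$) to replace $p(x_i)$ by $p(x_i)-p(0)$ in the linear term. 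You instead collapse the whole linear part into the single vector $V=\sum_i X_iK_h(X_i)/\sum_i K_h(X_i)$, center it at the population quantity $c=\En[X_1K_h(X_1)]/\En[K_h(X_1)]$, and apply the symmetry argument once to show $\norm{c}=O(L_p p_{\min}^{-1}h^\beta)$; the fluctuation $V-c$ is then handled by exactly the mechanism of Lemma~\ref{lem:variance}, since $(X_i-c)K_h(X_i)$ are i.i.d.\ mean zero. Both routes hinge on the same idea---radial symmetry annihilates the first-order Taylor term against a constant density, so only $p(x)-p(0)$ survives---but your packaging avoids the pairwise conditional-expectation bookkeeping and makes the link to the variance lemma explicit. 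A minor bonus of your computation is that the stochastic piece comes out as $O(h^2/(nh^d))$, i.e.\ a factor $h^2$ better than $\sigma_X^2$, though this does not change the final rate.
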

\begin{proof}
	We write \eqref{eq:bias_eq} as
	\begin{align*}
		b^2(0) = \En \left[ \sum_{i,j=1}^n G_i G_j \, \ind{\mathcal{E}} \right]
	\end{align*}
	where
	\begin{align*}
		G_i = \frac{(f(X_i)-f(0)) K_h(X_i)}{\sum_{i=1}^n K_h(X_i)}.
	\end{align*}
	For $i\neq j$ we can write 
	\begin{align*}
		\En[G_i G_j\,\ind{\mathcal{E}}] = \En\left[ (f(X_i)-f(0))(f(X_j)-f(0))A(X_i,X_j)\right]
	\end{align*}
	where
	$$A(X_i,X_j) =  \frac{K_h(X_i)K_h(X_j)}{\left(\sum_{i=1}^n K_h(X_i)\right)^2}\,\ind{\mathcal{E}} \geq 0. $$
	We omit for brevity the dependence of $A(X_i,X_j)$ on $(X_k, k\ne i, k\ne j)$. 
	Thus, 
		\begin{align*}
		\En'[G_i G_j\, \ind{\mathcal{E}}] = \int_{\reals^d}\int_{\reals^d} (f(x_i)-f(0))(f(x_j)-f(0))A(x_i,x_j)p(x_i)p(x_j) dx_i dx_j
	\end{align*}
	where $\En'$ denotes the conditional expectation  over $(X_i,X_j)$ for fixed $(X_k, k\ne i, k\ne j)$.
	Let us define
	$$R(x_i) = f(x_i)-f(0) - \inner{\nabla f(0), x_i} ~~\mbox{and}~~ R(x_j) = f(x_j)-f(0) - \inner{\nabla f(0), x_j}.$$
	Then
	\begin{align*}
		\En'[G_i G_j\, \ind{\mathcal{E}}] &= \int_{\reals^d}\int_{\reals^d} \inner{\nabla f(0), x_i}\inner{\nabla f(0), x_j}A(x_i,x_j)p(x_i)p(x_j) dx_i dx_j \\
		&+ 2\int_{\reals^d}\int_{\reals^d} \inner{\nabla f(0), x_i} R(x_j) A(x_i,x_j)p(x_i)p(x_j) dx_i dx_j\\
		&+ \int_{\reals^d}\int_{\reals^d}  R(x_i) R(x_j) A(x_i,x_j)p(x_i)p(x_j) dx_i dx_j
	\end{align*}
	where the factor $2$ arises from symmetry considerations. Now observe that 
	$$\int_{\reals^d} \inner{\nabla f(0), x_i}A(x_i,x_j)p(0)dx_i = 0$$
	for any $x_j$ since the function under the integral is odd for any fixed $(X_k, k\ne i, k\ne j)$. Applying this observation for both $x_i$ and $x_j$ in the first term of the above decomposition, as well as for the second term, we obtain
	\begin{align*}
		\En'[G_i G_j\, \ind{\mathcal{E}}] &= \int_{\reals^d}\int_{\reals^d} \inner{\nabla f(0), x_i}\inner{\nabla f(0), x_j}A(x_i,x_j)(p(x_i)-p(0))(p(x_j)-p(0)) dx_i dx_j \\
		&+ 2 \int_{\reals^d}\int_{\reals^d} \inner{\nabla f(0), x_i} R(x_j) A(x_i,x_j)(p(x_i)-p(0))p(x_j) dx_i dx_j\\
		&+ \int_{\reals^d}\int_{\reals^d}  R(x_i) R(x_j) A(x_i,x_j)p(x_i)p(x_j) dx_i dx_j.
	\end{align*}
	Condition \eqref{eq:holder12} implies that $|R(x_i)|\leq L_f\norm{x_i}^\beta$. Next, recall that $A$ is zero whenever either $\norm{x_i}>h$ or $\norm{x_j}>h$. Using Cauchy-Shwarz inequality for the inner products and the H\"older assumption on $p$, we conclude that
	\begin{align*}
		\En'[G_i G_j\, \ind{\mathcal{E}}] &\leq B^2L_p^2 h^{2\beta}\int_{\reals^d}\int_{\reals^d}  A(x_i,x_j)dx_i dx_j \\
		&+ 2B L_fL_p h^{2\beta} \int_{\reals^d}\int_{\reals^d}  A(x_i,x_j) p(x_j) dx_i dx_j\\
		&+ L_f^2h^{2\beta} \int_{\reals^d}\int_{\reals^d}  A(x_i,x_j)p(x_i)p(x_j) dx_i dx_j
	\end{align*}
	where $B=\norm{\nabla f(0)}^2$. Using the lower bound $p_{\min}$ on the density, completing the square and taking the expectation with respect to $(X_k, k\ne i, k\ne j)$, we establish 
	\begin{align*}
		\En[G_i G_j\, \ind{\mathcal{E}}] &\leq h^{2\beta} (BL_p p_{\min}^{-1} + L_f)^2 \En \left[ \frac{K_h(X_i)K_h(X_j)}{\left(\sum_{i=1}^n K_h(X_i)\right)^2}\, \ind{\mathcal{E}} \right].
	\end{align*}
	On the other hand, the sum of diagonal elements is
	\begin{align*}
		\sum_{i=1}^n \En[G_i^2\, \ind{\mathcal{E}}] = n \En\left[\frac{K^2(X_1)}{\left(\sum_{i=1}^n K_h(X_i)\right)^2}
		\, \ind{\mathcal{E}}
		\right],
	\end{align*}
	which is precisely the variance term $\sigma_X^2$. Finally,
	\begin{align*}
		\sum_{i\neq j} \En[G_iG_j\, \ind{\mathcal{E}}] &= h^{2\beta} (BL_p p_{\min}^{-1} + L_f)^2 \En \left[ \frac{\sum_{i\neq j} K_h(X_i)K_h(X_j)}{\left(\sum_{i=1}^n K_h(X_i)\right)^2}\, \ind{\mathcal{E}}\right]\\
		&\leq h^{2\beta} (BL_p p_{\min}^{-1} + L_f)^2 \En \left[ \frac{\sum_{i,j=1}^n K_h(X_i)K_h(X_j)}{\left(\sum_{i=1}^n K_h(X_i)\right)^2}\, \ind{\mathcal{E}}\right]\\
		&\leq h^{2\beta} (BL_p p_{\min}^{-1} + L_f)^2.
	\end{align*}
	
\end{proof}

\subsection{Proofs of Theorem~\ref{thm:beta01} and ~\ref{thm:beta12}}

The two theorems follow immediately from Lemmas~\ref{lem:variance}, \ref{lem:bias_holder_01}, and \ref{lem:bias_holder_12} by balancing
$$n\exp\left\{-Cnh^d\right\}+\frac{C}{nh^d}+C h^{2\beta}$$
with $h=n^{-\frac{1}{2\beta+d}}$.

\section{Discussion}

We presented a proof of concept: an interpolating rule can achieve optimal rates for the problems of nonparametric estimation and prediction with square loss. Our proof technique extends to other kernels where the indicator over the unit Euclidean ball in \eqref{eq:def_our_kernel} is replaced with a function that dominates an appropriately scaled indicator. The analysis also works for non-singular kernels under the assumption of square integrability (required only in Eq.~\eqref{eq:square_integrability}). 

While each pair $(X_i,Y_i)$ is fit exactly by the proposed estimator, the influence of the datapoint is local. In aggregate, however, the function $f_n$ is being ``pulled'' towards the true regression function $f$. Whether a similar phenomenon occurs in other interpolating rules---such as overparametrized neural networks---requires further investigation.

\bibliography{refs}

\end{document}